\title{Generating High-Order Threshold Functions with Multiple Thresholds}
\author{Yukihiro KAMADA \thanks{Doctor of Engineering} \and Kiyonori MIYASAKI \thanks{Professor Emeritus at Tohoku Institute Technology}}
\newtheorem{dfn}{Definition}
\newtheorem{thm}{Theorem}
\begin{document}
\maketitle

\begin{abstract}
In this paper, we consider situations in which a given logical function is realized by a multithreshold threshold function. In such situations, constant functions can be easily obtained from multithreshold threshold functions, and therefore, we can show that it becomes possible to optimize a class of high-order neural networks. We begin by proposing a generating method for threshold functions in which we use a vector that determines the boundary between the linearly separable function and the high-order threshold function. By applying this method to high-order threshold functions, we show that functions with the same weight as, but a different threshold than, a threshold function generated by the generation process can be easily obtained. We also show that the order of the entire network can be extended while maintaining the structure of given functions.
\end{abstract}

\section{Introduction}
Threshold elements are considered to have the potential for being the components of systems with high-order information-processing capabilities. Theoretical analysis is currently being carried out on this mathematical model \cite{Muroga}. By adding conditions to the threshold functions that determine the logical operations of threshold elements, construction and optimization of networks of a certain class become easier. One example of constructing networks with threshold elements involves self-dual threshold functions, which comprise a subset of the entirety of threshold functions: it has been shown that the structure of a high-order function in the same class can be obtained using the sum of the weights of two functions that differ \cite{Miyasaki1}.
Because the conditions attached to functions contribute to network optimization, there have also been studies focusing on the characteristics of threshold functions \cite{Kamada1}, \cite{Kamada2}. However, because functions realizable by single-threshold functions are restricted to the linearly separable class, studies are being carried out on high-order threshold functions in which terms that consist of the product of input variables are being attached to the threshold function \cite{HwaandSheng}-\cite{Kamada7}. This approach allows arbitrary logical functions to be realized. As one example of constructing a high-order neural network, it has been shown that, by focusing on the monotonicities (a condition required for a given logical function to be a threshold function), it becomes possible to create high-order threshold functions that realize monotonic functions of a certain class \cite{Kamada4}, \cite{Kamada5}.
Furthermore, as an alternative to the method of high-order terms of high-order threshold functions, the use of a monotonic multilayer function, which uses multiple threshold functions and exclusive disjunctions, has been proposed \cite{Kamada8}. However, using these functions to specify the operations of network elements require a discrete approach; the analysis of this becomes difficult, and thus few theoretical investigations have been performed.
In this paper, by focusing on input vectors characterizing the high-order threshold functions and using the qualities of multithreshold threshold functions, we show a method by which high-order terms of a certain class of high-order threshold functions can be easily removed. We also show that multithreshold threshold functions can be embedded in high-order threshold functions, and that the order of high-order threshold functions can be easily extended, while maintaining their structure.

\section{Preliminaries}
This section gives mathematical expressions for the high-order threshold function and the multithreshold threshold function, and also terminology, to form a basis for Section 3.

Let the set composed of 0 and 1 be $B=\left\{0,1\right\}$.
The $n$-fold direct product of $B$ is written as $B^n$:
\[
B^n=\left\{\left( x_1, x_2, \ldots , x_n \right) \mid  x_i \in B, i \in \left\{1, 2, \ldots , n\right\}\right\}.
\]
The set of all real numbers is written as $\Re $.
\begin{dfn}\quad  
\begin{em}
The function $G$ from $B^n$ to $\Re $ is defined as follows.
For $X=\left( x_1, x_2, \ldots , x_n \right) \in B^n$, we define 
\begin{equation}
G\left( X \right) = \sum _{i=1}^{n} a_ix_i  + \sum _{1 \le i_1 < i_2 \le n} a_{i_1i_2}x_{i_1}x_{i_2} 
 + \cdots + \sum _{1 \le i_1 < \cdots < i_r \le n} a_{i_1\cdots i_r}x_{i_1}\cdots x_{i_r}, 
\end{equation}
where all $a_i, $ $a_{i_1i_2},\ldots , $ $a_{i_1\cdots i_r}$ are real-valued.
$r$ is a positive integer not greater than $n$.

Using the above $G$, the function $g$ from $B^n$ to $B$ is defined as follows.
For $X=\left( x_1, x_2, \ldots , x_n \right) \in B^n$, we define 
\[
g\left( X \right) = 
\left\{
\begin{array}{ll}
 1 & \textrm{if} \  G \ge \theta \quad  \\
 0 & \textrm{otherwise}, \quad
\end{array}
\right.
\]
where $\theta $ is a real-valued.
The above $g$ is called an $n$-variable high-order threshold function.
In $G$, if there exist $i_1 \cdots i_r$ satisfying $1 \le i_1 < \cdots < i_r \le n$, such that $a_{i_1\cdots i_r} \neq 0$, $r$ is called the order of $g$.
When there exist multiple $r$, such as $r = 1, 2, \ldots , $ the maximum of $r$ is defined as the order of $g$.
When $r = 1,$ $g$ is a threshold function.
$\theta $ is called the threshold of $g$, and $a_i, $ $a_{i_1i_2},\ldots , $ $a_{i_1\cdots i_r}$ are called the weight of $g$.
Unless otherwise stated, an $n$-variable high-order threshold function of order $r$ is simply called a high-order threshold function.
\end{em}
\end{dfn}

\begin{dfn}[\cite{Muroga}]
\begin{em}
\quad 
If $k$ threshold functions $f_1$, $f_2 \ldots $, $f_k$ can be generated by changing the threshold and keeping the weights constant, these functions are said to share the same weight.
  This is denoted by 
\[
\sim  \left(f_1, f_2 \ldots , f_k \right).
\]
\end{em}
\end{dfn}

\begin{dfn}\quad 
\label{def_vector_of_high-order}
\begin{em}
Let $g$ be an $n$-variable high-order threshold function of order $r$. Consider the input vector $Y \in B^n$ for $g$. $Y$ is called the high-order vector of $g$ only when $f$, as defined by
\begin{equation}
\label{vector_of_high-order}
\left.
\begin{array}{lll}
f \left( Y \right) & = & 1 - g \left( Y \right),   \\
f \left( X \right) & = & g \left( X \right), \quad  \left( X \neq Y, \  X \in B^n \right),
\end{array}
\right\}
\end{equation}
is a high-order threshold function of order $s \left( \neq r \right)$. 
The generating process for function $f$ by Eq. $\left(\ref{vector_of_high-order}\right)$ is denoted by $g \xrightarrow []{Y}  f$.
\end{em}
\end{dfn}

\begin{dfn}\quad 
\begin{em}
Let $k$ threshold functions of $n$ variables be expressed as $h_i$, $i \in \left\{ 1, 2, \ldots , k \right\}$. For $X_j \in B^n$, $j \in \left\{ 1, 2, \ldots , 2^n \right\}$, $h$, defined in the following equation,   is called a multithreshold threshold function or a $k$-threshold threshold function.
\[
h \left( X_j \right) \  = \  \sum _{i=1}^{k} h_i \left( X_j \right)   \  \bmod \  2. 
\]
\end{em}
\end{dfn}

\begin{dfn}[\cite{Muroga}]
\begin{em}
\quad 
If an $n$-variable logic function $f$ is $m$-asummable with respect to a given $m$, this means that for $k \left( 2 \le  k \le  m \right)$, no pair of true vectors $\mathcal{X} _{\left( i \right)}$ or false vectors $\mathcal{X} _{\left( j \right)}$ exists that satisfies the following equation:
\begin{equation}
\label{Eq. asummable}
\sum _{i=1}^{k} \mathcal{X} _{\left( i \right)} = \sum _{j=1}^{k} \mathcal{X} _{\left( j \right)}.
\end{equation}
In Eq. $\left( \ref{Eq. asummable} \right)$, $\mathcal{X} _{\left( i \right)}$ and $\mathcal{X} _{\left( j \right)}$ allow for overlapping in both the left- and right-hand sides. If $f$ is $m$-asummable for $m\left( \ge 2 \right)$, $f$ is said to be asummable.
\end{em}
\end{dfn}

\begin{thm}[Asummability Theorem \cite{Muroga}]
\quad 
\label{Asummability Theorem}
A necessary and sufficient condition for a given logic function $f$ to be a threshold function is that $f$ is asummable.
\end{thm}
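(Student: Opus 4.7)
The plan is to prove the two directions of the equivalence separately. Necessity is a short inner-product calculation; sufficiency is the substantive direction and relies on a separation-type theorem of the alternative.

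For necessity, I would assume $f$ is realized by weights $a_1,\ldots,a_n$ and threshold $\theta$, and argue by contradiction. Suppose for some $k\ge 2$ there exist true vectors $\mathcal{X}_{(1)},\ldots,\mathcal{X}_{(k)}$ and false vectors $\mathcal{X}_{(1)}',\ldots,\mathcal{X}_{(k)}'$ (with repetitions allowed) whose coordinatewise sums coincide. Taking the inner product of both sides of the vector identity with $(a_1,\ldots,a_n)$ yields a scalar identity. The left-hand sum is at least $k\theta$, since each true vector contributes $\sum_i a_i x_i\ge\theta$, while the right-hand sum is strictly less than $k\theta$, since each false vector contributes $\sum_i a_i y_i<\theta$. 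This contradiction rules out any such summation equality, establishing asummability.

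For sufficiency, assume $f$ is asummable and phrase the threshold realization as the feasibility problem of finding $(a_1,\ldots,a_n,\theta)\in\Re^{n+1}$ with $\sum_i a_i x_i\ge\theta$ for every true vector $X$ and $\sum_i a_i y_i\le\theta-1$ for every false vector $Y$. Because $B^n$ is finite, replacing the strict inequality by a slack of $1$ costs no generality after suitable scaling. Suppose this system is infeasible. Then by a Farkas/Motzkin-type alternative, there exist nonnegative rationals $\lambda_X$ indexed by true vectors and $\mu_Y$ indexed by false vectors, not all zero, satisfying $\sum_X \lambda_X X-\sum_Y \mu_Y Y=0$ together with $\sum_X \lambda_X=\sum_Y \mu_Y$; the second identity arises from the column corresponding to the unrestricted variable $\theta$. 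Clearing a common denominator and then expanding each coefficient into that many copies of its associated vector yields multisets of true and false vectors of equal cardinality $k$ whose coordinatewise sums agree, directly violating asummability. Hence the system is feasible, and any solution gives a threshold representation of $f$.

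The main obstacle is the separation step in the sufficiency direction: one must invoke or reprove an appropriate theorem of the alternative and verify that a rational dual certificate can be converted into an integer multiset certificate while preserving the equal-cardinality requirement. The unrestricted sign of $\theta$ is what produces the balance $\sum\lambda_X=\sum\mu_Y$, and this balance is precisely what allows the violation to take the symmetric form demanded by Eq.~$(\ref{Eq. asummable})$; without it one would obtain only a signed combination, which is strictly weaker than what the definition of asummability forbids.
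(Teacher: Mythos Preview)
The paper does not give a proof of this theorem at all; it is stated with a citation to Muroga and then used as a black box in the proof of Theorem~2. There is therefore nothing in the paper to compare your argument against.

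That said, your outline is correct and is essentially the classical proof one finds in Muroga's book: necessity by pairing a putative summation identity with the weight vector, and sufficiency by casting realizability as a finite system of linear inequalities and invoking a theorem of the alternative. Two small points are worth making explicit when you write it up. First, the Farkas/Motzkin certificate can be taken rational because the constraint matrix has integer entries, so the ``clear denominators'' step is legitimate. Second, after clearing denominators the common cardinality $k=\sum_X\lambda_X=\sum_Y\mu_Y$ could a priori equal $1$, which Eq.~(\ref{Eq. asummable}) does not cover since the definition requires $k\ge 2$; but $k=1$ would force a single true vector to coincide with a single false vector, which is impossible for a well-defined function, so in fact $k\ge 2$ and the contradiction with asummability is genuine.
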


\section{Extending the Order of High-Order Threshold Functions}
In this section, we show that given a high-order threshold function, a threshold function can be obtained by focusing on high-order vectors. We also show that the structure of given multithreshold threshold functions can be maintained when extending the order of the high-order threshold functions.

\begin{thm}\quad 
Let $g$ represent an $n$-variable high-order threshold function of order $r\left( \ge 2 \right)$.
 Then $f_2$ represents an $n$-variable threshold function obtained by the following equation in relation to the $n$-order input vector $Y$.
\begin{equation}
\label{g-f_2}
g \xrightarrow []{Y} f_2.
\end{equation}
If a given $n$-variable function $f_1$ satisfies
\begin{equation}
\label{condition_f_1}
f_1 = g \oplus f_2,
\end{equation}
then the order of a high-order threshold function that realizes $f_1$ is expressed by $ r = 1 $.
\end{thm}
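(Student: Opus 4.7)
The plan is to first determine $f_1$ explicitly from the definitions, and then exhibit threshold weights realizing it.

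For the first step, I would unpack Eq.~(\ref{vector_of_high-order}): the generating process $g \xrightarrow{Y} f_2$ forces $f_2(X) = g(X)$ for all $X \neq Y$ and $f_2(Y) = 1 - g(Y)$. Substituting into $f_1 = g \oplus f_2$ gives $f_1(X) = g(X) \oplus g(X) = 0$ for every $X \neq Y$, and $f_1(Y) = g(Y) \oplus (1 - g(Y)) = 1$. Hence $f_1$ is precisely the indicator of the singleton $\{Y\}$.

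For the second step, I would realize this singleton indicator as a linear threshold function by choosing, with $Y = (y_1, \ldots, y_n)$, the weights $a_i = 2 y_i - 1$ (so $+1$ where $y_i = 1$ and $-1$ where $y_i = 0$) and the threshold $\theta = \sum_{i=1}^{n} y_i$. A direct coordinate-by-coordinate check shows that $\sum_{i=1}^{n} a_i x_i$ attains its maximum value $\theta$ exactly at $X = Y$, because flipping any coordinate from its value in $Y$ either deletes a $+1$ contribution or introduces a $-1$ contribution, each strictly lowering the linear form. Thus $f_1(X) = 1$ iff $\sum_i a_i x_i \geq \theta$ iff $X = Y$, so $f_1$ is realized with no terms of degree $\geq 2$ in $G$, giving order $r = 1$.

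I do not anticipate a substantive obstacle. The hypothesis that $Y$ is a high-order vector of $g$ and that $f_2$ is a threshold function is used only to justify the pointwise XOR identity above; the linear separability of every singleton in $B^n$ is then transparent from the explicit weights $a_i = 2 y_i - 1$. If a nonconstructive derivation is preferred, one can instead invoke Theorem~\ref{Asummability Theorem}: since $f_1$ has only the one true vector $Y$, no collection of distinct true vectors is available to populate the left-hand side of Eq.~(\ref{Eq. asummable}), so no asummability violation can arise and $f_1$ must be a threshold function.
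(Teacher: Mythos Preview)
Your proof is correct. Both you and the paper begin identically: unwind Definition~\ref{def_vector_of_high-order} to see that $f_1(X)=g(X)\oplus f_2(X)$ vanishes off $Y$ and equals $1$ at $Y$, so $f_1$ is the indicator of the singleton $\{Y\}$. For the second step the paper invokes Theorem~\ref{Asummability Theorem} directly---with only one true vector available, no violating identity of the form~(\ref{Eq. asummable}) can be assembled---which is precisely the alternative you sketch in your final paragraph. Your primary route, exhibiting the explicit weights $a_i=2y_i-1$ and threshold $\theta=\sum_i y_i$, is a mild constructive strengthening: it not only certifies order $r=1$ but names a realizing structure, at the cost of a short coordinate check in place of the paper's one-line appeal to asummability.
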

\begin{proof}
\quad 
By Eq. $\left( \ref{g-f_2} \right)$, if $f_2$ (obtained by replacement of the function value of $g$ in relation to input vector $Y$) is a threshold function, then $Y$ is a high-order vector of $g$. From Definition \ref{def_vector_of_high-order}, because \[g \left( Z \right) = f_2 \left( Z \right)\] is established with regard to input vector $Z \in \left\{ B^n \setminus  Y \right\}$, \[g \left( Z \right) \oplus  f_2 \left( Z \right) = 0\] is obtained. Therefore, there is only one true vector for the $n$-variable function $f_1$ expressed in Eq. $\left( \ref{condition_f_1} \right)$. Given that a pair of true vectors does not exist in regard to $f_1$, $f_1$ is a threshold function, from Theorem \ref{Asummability Theorem}. 
Thus, the point in question holds.
\end{proof}
\begin{thm}\quad 
Let $f_n$ represent an $n$-variable high-order threshold function of order $r\left( \ge 2 \right)$.
 The two $n$-variable threshold functions denoted by
 \[\sim  \left( f^{1}_{n}, f^{2}_{n} \right)\]
  in relation to $n\left( \ge 2 \right)$ are called $f^{1}_{n}$ and $f^{2}_{n}$. If the equation
\begin{equation}
\label{condition_f_n}
f_n = f^{1}_{n} \oplus f^{2}_{n}
\end{equation}
is established, then the $\left(n+1\right)$-variable function $f_{n+1}$ expressed by the following equation is a two-threshold threshold function. 
\begin{equation}
\label{generation_f_n+1}
f_{n+1} = g_{n+1} \oplus f^{1}_{n+1} \oplus f^{2}_{n+1},
\end{equation}
where 
\begin{subequations}
\begin{eqnarray}
\label{eq_degenerate}
g_{n+1}\left( X, x_{n+1} \right) &=& \overline{x_{n+1}} \cdot f_n \vee x_{n+1} \cdot f_n , \\
\label{eq_added}
f^{i}_{n+1}\left( X, x_{n+1} \right) &=& \overline{x_{n+1}} \cdot f^{i}_{n} \vee x_{n+1} \cdot 1, \  i \in \left\{ 1, 2 \right\}, 
\end{eqnarray}
\end{subequations}
 in relation to $X \in B^n$.
\end{thm}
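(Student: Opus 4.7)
The plan is to show that $f_{n+1}$ is a two-threshold threshold function by exhibiting an explicit decomposition of $f_{n+1}$ as the XOR of two $(n+1)$-variable threshold functions, as required by Definition~4. I first evaluate Eq.~$\left(\ref{generation_f_n+1}\right)$ on each of the two slices of $x_{n+1}$. Because $\overline{x_{n+1}} \vee x_{n+1} = 1$, Eq.~$\left(\ref{eq_degenerate}\right)$ gives $g_{n+1}\left(X, x_{n+1}\right) = f_n\left(X\right)$ on both slices. When $x_{n+1} = 0$, Eq.~$\left(\ref{eq_added}\right)$ reduces each $f^{i}_{n+1}$ to $f^{i}_{n}$, so using hypothesis Eq.~$\left(\ref{condition_f_n}\right)$ the right-hand side of Eq.~$\left(\ref{generation_f_n+1}\right)$ collapses to $f_n \oplus f^{1}_{n} \oplus f^{2}_{n} = 0$. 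When $x_{n+1} = 1$, each $f^{i}_{n+1}$ equals $1$, so $f_{n+1}\left(X, 1\right) = f_n\left(X\right) \oplus 1 \oplus 1 = f_n\left(X\right)$. Thus $f_{n+1}$ vanishes on the lower slice and agrees with $f_n$ on the upper slice.

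Next, I construct two $(n+1)$-variable threshold functions $h^{1}, h^{2}$ whose XOR realizes this slicewise pattern, taking $h^{i}\left(X, x_{n+1}\right) = x_{n+1} \cdot f^{i}_{n}\left(X\right)$ as the target behaviour. Let $\left( a_1, \ldots, a_n \right)$ be the common weight of $f^{1}_{n}, f^{2}_{n}$ (Definition~2), with respective thresholds $\theta_1, \theta_2$, and set $M = \sum_{j:\, a_j > 0} a_j$. Pick any real $c$ with $c > M - \min\left( \theta_1, \theta_2 \right)$, and assign $h^{i}$ the weights $\left( a_1, \ldots, a_n, c \right)$ and threshold $\theta_i + c$. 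Since $\theta_i + c > M \geq \sum_j a_j x_j$ for every $X \in B^n$, $h^{i}\left(X, 0\right) = 0$; and $\sum_j a_j x_j + c \geq \theta_i + c \Longleftrightarrow \sum_j a_j x_j \geq \theta_i$, giving $h^{i}\left(X, 1\right) = f^{i}_{n}\left(X\right)$. Hence each $h^{i}$ is a threshold function of $n+1$ variables, and the construction even yields $\sim\left( h^{1}, h^{2} \right)$. Finally, $\left(h^{1} \oplus h^{2}\right)\left(X, x_{n+1}\right)$ equals $0$ on the lower slice and $f^{1}_{n}\left(X\right) \oplus f^{2}_{n}\left(X\right) = f_n\left(X\right)$ on the upper slice, matching $f_{n+1}$ on all of $B^{n+1}$; by Definition~4, $f_{n+1}$ is a two-threshold threshold function.

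The main obstacle is spotting the right candidate pair $h^{1}, h^{2}$; once one sees that the target is $x_{n+1} \cdot f^{i}_{n}$, proving each lift is a threshold function reduces to the familiar raise-the-threshold-and-compensate-with-a-new-weight trick, available because $\sum_j a_j x_j$ is bounded on $B^n$. The slicewise evaluation of Eq.~$\left(\ref{generation_f_n+1}\right)$ relies only on $\overline{x_{n+1}} \vee x_{n+1} = 1$ and hypothesis Eq.~$\left(\ref{condition_f_n}\right)$, and checking $h^{1} \oplus h^{2} = f_{n+1}$ is then a direct slice-by-slice calculation.
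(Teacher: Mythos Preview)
Your proof is correct and follows essentially the same approach as the paper: both reduce Eq.~$\left(\ref{generation_f_n+1}\right)$ slicewise to $f_{n+1}=x_{n+1}\cdot f_n$ using $g_{n+1}=f_n$ and hypothesis Eq.~$\left(\ref{condition_f_n}\right)$, and then argue that this lift of a two-threshold function is again two-threshold. Your argument is in fact more complete than the paper's, since where the paper merely asserts that ``new threshold functions are unnecessary for realizing $f_{n+1}$,'' you explicitly construct the $(n+1)$-variable pair $h^{1},h^{2}$ with common weight $(a_1,\ldots,a_n,c)$ and verify $h^{1}\oplus h^{2}=f_{n+1}$.
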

\begin{proof}
\quad 
By Eq. $\left(\textrm{\ref{eq_degenerate}}\right)$, the $\left(n+1\right)$-variable function $g_{n+1}$ is an $n$-variable function degenerated by variable $x_{n+1}$, so
\begin{equation}
\label{degenerate_result}
g_{n+1} = f_n
\end{equation}
holds. 
From Eqs. $\left(\textrm{\ref{eq_degenerate}}\right)$, $\left(\textrm{\ref{eq_added}}\right)$, and $\left(\ref{degenerate_result}\right)$, Eq. $\left(\ref{generation_f_n+1}\right)$ can be transformed as follows: 
\begin{equation}
\label{before abbreviation}
f_{n+1} = \overline{x_{n+1}} \cdot \left( f_n \oplus f^{1}_{n} \oplus f^{2}_{n} \right) \vee x_{n+1} \cdot f_n.
\end{equation}
It is clear from Eq. $\left(\ref{condition_f_n}\right)$ that $f_n \oplus f^{1}_{n} \oplus f^{2}_{n} = 0$, 
so Eq. $\left( \ref{before abbreviation} \right)$ then becomes \[f_{n+1} = \overline{x_{n+1}} \cdot 0 \vee x_{n+1} \cdot f_n, \] and $f_{n+1}$ has the same structure as $f_n$ in the case $x_{n+1} = 1$. By Eq. $\left( \ref{condition_f_n} \right)$, $f_n$ is a two-threshold threshold function. In addition, from Eq. $\left( \ref{before abbreviation}\right)$, the input vectors satisfying $x_{n+1} = 0$ are all false vectors, and thus new threshold functions are unnecessary for realizing $f_{n+1}$. Therefore, the point in question holds.
\end{proof}

\section{Conclusions}
In this paper, we generated threshold functions on the basis of high-order vectors. Specifically, we showed that when a function is realized by multiple, separate functions with the same weight but different thresholds, by using the fact that constant functions can be obtained from these functions, the derivation of threshold functions can be easily made. We also showed that the order of high-order threshold functions can be easily increased while maintaining the structure of a given multithreshold threshold function. This may provide one foothold in improving the information-processing capacity of entire networks while maintaining previous functions. One important topic in the future will be the theoretical examination of functions with various characteristics.


\begin{thebibliography}{99}
 \bibitem{Muroga}Muroga S, Threshold logic and its applications. John Wiley \& Sons; 1971.
 \bibitem{Miyasaki1}Miyasaki K. Generation of self-dual threshold function. Trans IEICE 1981; J64-D:1-8.
 \bibitem{Kamada1}Kamada Y, Miyasaki K. Generation of monotonic logic function. Mem I Tohoku Inst Tech Sci Eng No. 21, p45-47, 2001.
 \bibitem{Kamada2}Kamada Y, Miyasaki K. On the monotonic functions of 6-variables. Mem I Tohoku Inst Tech Sci Eng No. 26, p169-172, 2006.
 \bibitem{HwaandSheng}Hwa HR, Sheng CL. An approach for the realization of threshold functions of order $r$. IEEE Trans Comput 1969; C-18:923-939.
 \bibitem{Kamada3}Kamada Y, Miyasaki K. A note on the minimal-degree realization of threshold functions of high-order. Mem I Tohoku Inst Tech Sci Eng No. 23, p13-17, 2003.
 \bibitem{Kamada6}Kamada Y, Miyasaki K. Realization of 2-dual-monotonic switching functions by minimal order threshold functions of high-order. Mem I Tohoku Inst Tech Sci Eng No. 25, p127-130, 2005.
 \bibitem{Kamada7}Kamada Y, Miyasaki K. A realization of threshold functions of minimal order $r$ about monotonic switching functions of six variables. Mem I Tohoku Inst Tech Sci Eng No. 27, p141-147, 2007.
 \bibitem{Kamada4}Kamada Y, Miyasaki K. Generation of threshold functions of high-order satisfying monotonicities. Trans IEICE 2004; J87-D:339-349.
 \bibitem{Kamada5}Kamada Y, Miyasaki K. A note on generation of a certain class of threshold functions of order $r$. Trans IEICE 2004; J87-D:424-428.
 \bibitem{Kamada8}Kamada Y, Miyasaki K. Generation of Two-Layer Monotonic Functions. arXiv : cs / 1211.0660 (2012).
\end{thebibliography}
\end{document}